\definecolor{grey1}{RGB}{192,192,192}
\definecolor{grey2}{RGB}{178,178,178}
\definecolor{grey3}{RGB}{150,150,150}
\definecolor{grey4}{RGB}{119,119,119}
\definecolor{grey5}{RGB}{77,77,77}
\definecolor{green}{RGB}{112,173,71}
\definecolor{blue2}{RGB}{68,115,196}
\definecolor{red}{RGB}{192,0,0}
\definecolor{yellow}{RGB}{255,192,0}
\newtheorem{theorem}{{Theorem}}
\newtheorem{remark}{{Remark}}
\setlist*[enumerate,1]{%
  label=(\roman*),
  font=\itshape,
}
\newcommand{\reducefiggap}{\vspace*{-1.6em}}
\newcommand{\Ga}{\mathcal{G}_a(x_{a,k})}
\newcommand{\Fa}{\mathcal{F}_a(x_{a,k})}
\newcommand{\Gm}{\mathcal{G}_m(x_{m,k})}
\newcommand{\Fm}{\mathcal{F}_m(x_{m,k})}
\newcommand{\edittext}{\textcolor{black}}
\begin{document}

\title{Bridging the Model-Reality Gap with Lipschitz Network Adaptation}

\author{Siqi Zhou, Karime Pereida, Wenda Zhao, and Angela P. Schoellig% <-this % stops a space
\thanks{Manuscript received: September 9, 2021; Accepted: November 7, 2021.}
\thanks{This paper was recommended for publication by Jens Kober  upon evaluation of the Associate Editor and Reviewers' comments.}
\thanks{This work was supported by the Natural Sciences and Engineering Research Council of Canada (NSERC), the Canada Research Chairs program, and the CIFAR AI Chairs program.
}
\thanks{The authors are with the Dynamic Systems Lab (\href{http://www.dynsyslab.org}{www.dynsyslab.org}), Institute for Aerospace Studies, University of Toronto, Canada and the Vector Institute for Artificial Intelligence, Toronto. Emails: \{siqi.zhou, karime.pereida, wenda.zhao, angela.schoellig\}@robotics.utias.utoronto.ca
}
\thanks{Digital Object Identifier (DOI): see top of this page.}%
}

% The paper headers
%\markboth{IEEE ROBOTICS AND AUTOMATION LETTERS. PREPRINT VERSION. ACCEPTED November, 2021}%
%{Shell \MakeLowercase{\textit{et al.}}: A Sample Article Using IEEEtran.cls for IEEE Journals}
\markboth{IEEE Robotics and Automation Letters. Preprint Version. Accepted November, 2021}
{Zhou \MakeLowercase{\textit{et al.}}: Bridging the Model-Reality Gap with Lipschitz Network Adaptation} 

%\IEEEpubid{0000--0000/00\$00.00~\copyright~2021 IEEE}
% Remember, if you use this you must call \IEEEpubidadjcol in the second
% column for its text to clear the IEEEpubid mark.

\maketitle

\begin{abstract}
As robots venture into the real world, they are subject to unmodeled dynamics and disturbances. Traditional model-based control approaches have been proven successful in relatively static and known operating environments. However, when an accurate model of the robot is not available, model-based design can lead to suboptimal and even unsafe behaviour. In this work, we propose a method that bridges the model-reality gap and enables the application of model-based approaches even if dynamic uncertainties are present. In particular, we present a learning-based model reference adaptation approach that makes a robot system, with possibly uncertain dynamics, behave as a predefined reference model. In turn, the reference model can be used for model-based controller design. In contrast to typical model reference adaptation control approaches, we leverage the representative power of neural networks to capture highly nonlinear dynamics uncertainties and guarantee stability by encoding a certifying Lipschitz condition in the architectural design of a special type of neural network called the Lipschitz network. Our approach applies to a general class of nonlinear control-affine systems even when our prior knowledge about the true robot system is limited. We demonstrate our approach in flying inverted pendulum experiments, where an off-the-shelf quadrotor is challenged to balance an inverted pendulum while hovering or tracking circular trajectories.
\end{abstract}

\begin{IEEEkeywords}
Machine learning for robot control, deep learning methods, robust/adaptive control.
\end{IEEEkeywords}

\section{INTRODUCTION}
\label{sec:introduction}
\IEEEPARstart{A}{dvances} in hardware and algorithms have enabled robots to enter more complex environments and perform increasingly versatile tasks such as home and healthcare services,  search and rescue, aerial package delivery, and industrial inspections. In these applications, robots need to cope with unmodeled dynamics, external time-varying disturbances, and other adverse factors such as communication latency. These practical issues pose challenges to the design of controllers using standard model-based techniques.

In the literature, common model-based control techniques include, but are not limited to, model predictive control (MPC) and linear quadratic regulators (LQR). These approaches are effective when the dynamics model of the robot system is sufficiently accurate and the operating environment does not change significantly over time. When these conditions are not met, model-based designs can lead to suboptimal or unsafe behaviour~\cite{brunke2021safe}. While there exist robust approaches that account for uncertainties by considering worst-case scenarios, these robust techniques can be often overly conservative~\cite{zhou1998essentials}.

An alternative approach to cope with dynamics uncertainty is to enable the system to adapt. One particular set of adaptive approaches is model reference adaptive control (MRAC), which aims to make the controlled system behave similarly to a desired reference model despite unknown disturbances~\cite{sastry2011adaptive}. Although classical adaptive control approaches techniques provide stability guarantees, they usually assume a particular system structure that limit the range of robotic applications to which they can be applied~\cite{sastry2011adaptive}. 

\begin{figure}
    \centering
    \begin{subfigure}{.5\textwidth}
     \includegraphics[width=\columnwidth]{./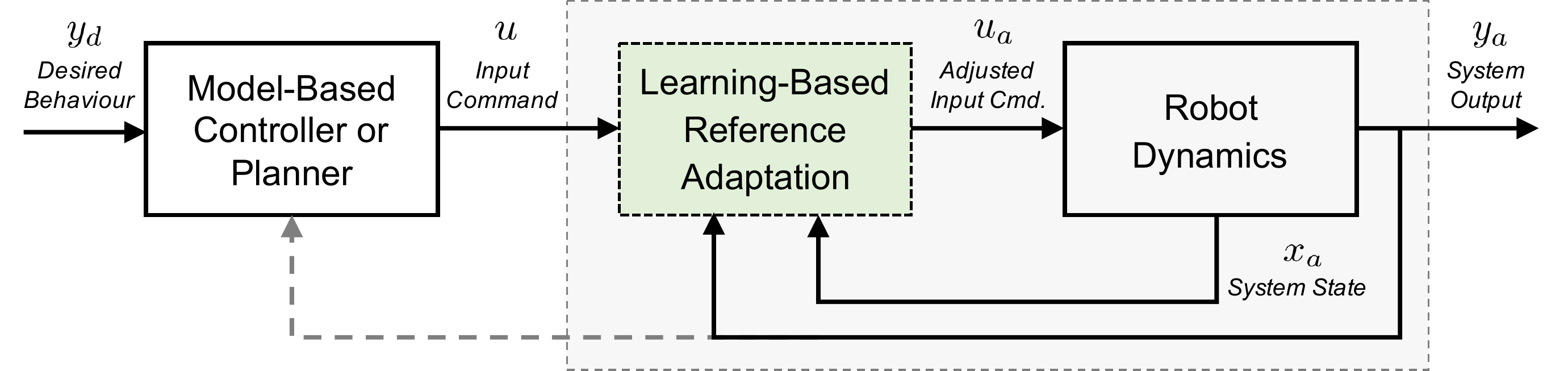}
     \caption{High-level block diagram}
     \label{fig:blockdiagram_highlevel}
     \end{subfigure}
     \begin{subfigure}{.5\textwidth}
     \vspace{1em}
    \includegraphics[width=\columnwidth]{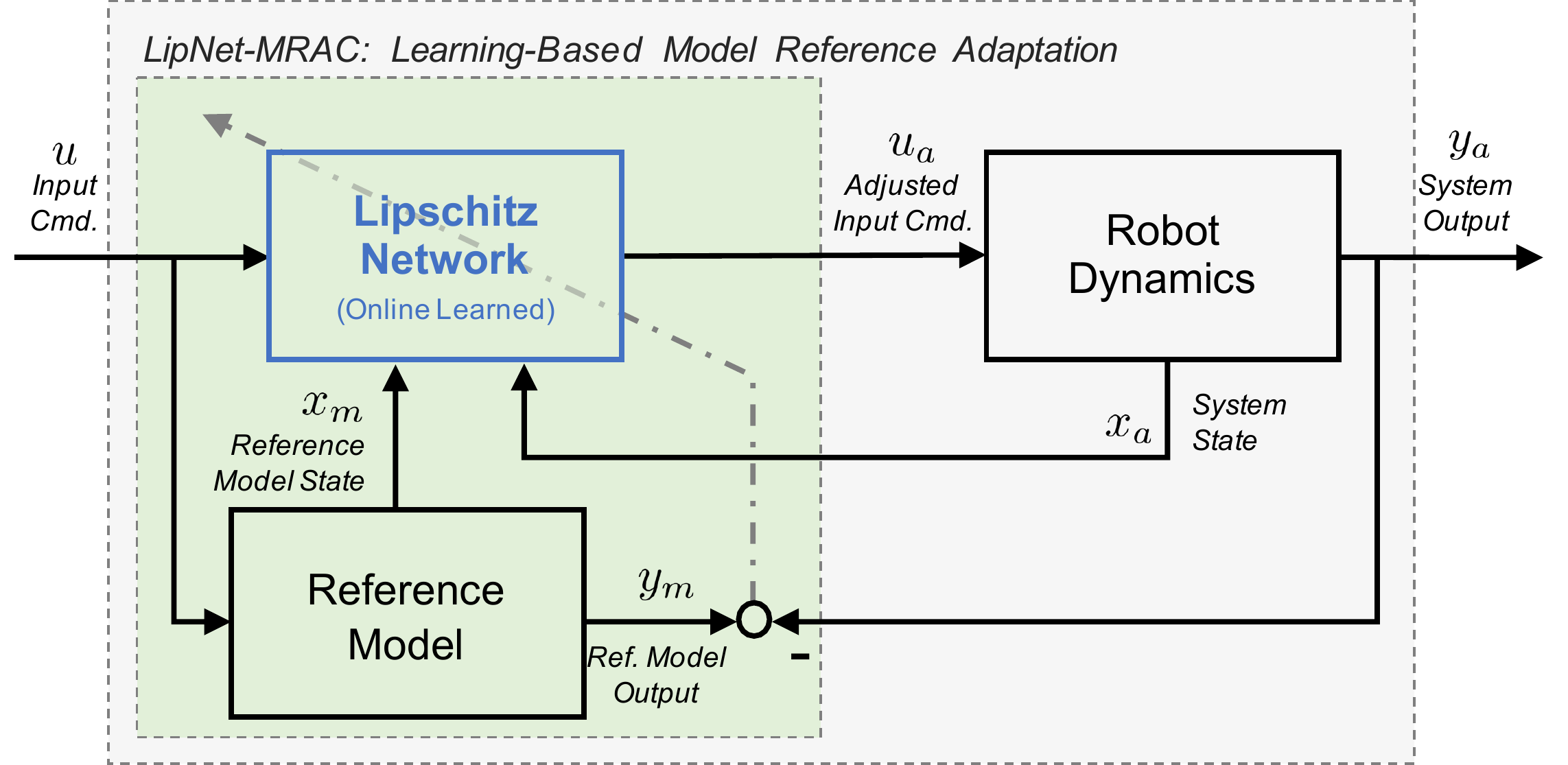}
      \caption{Detailed block diagram of the model reference adaptation module}
      \label{fig:blockdiagram_detailed}
     \end{subfigure}
    \caption{Block diagrams of the proposed approach. The grey box represents the robot system equipped with the proposed learning-based model reference adaptation module (green box).  The reference sent to the robot system is adapted online by a Lipschitz network (blue box) such that the response from the input $u$ to the output $y_a$ resembles the response of a reference model, which can be used in the outer model-based controller or planner. A video of flying inverted pendulum experimental results can be found here: \url{http://tiny.cc/lipnet-pendulum} }
    \label{fig:blockdiagram}
    \reducefiggap
\end{figure}

In this work, %inspired by recent advances in machine learning techniques, 
we propose a novel learning-based MRAC approach that bridges the model-reality gap and enables us to leverage the power and simplicity of model-based control techniques, even in dynamic and uncertain conditions. In particular, we consider the hierarchical architecture illustrated in Figure~\ref{fig:blockdiagram}, where a low-level adaptive module (green box) modifies the input to the system such that the system's input-output response resembles that of the reference model and a high-level controller is designed based on the reference model to achieve a desired robot behaviour. In contrast to existing MRAC approaches such as~\cite{sastry2011adaptive,cooper2014use,chowdhary2014bayesian}, we leverage the expressive power of deep neural networks (DNNs) to capture a broader range of unmodelled dynamics and guarantee stability by exploiting the Lipschitz property of a special type of DNN called Lipschitz network (LipNet)~\cite{anil2019sorting}. 

Our contributions are as follows:
\begin{enumerate}
\item presenting a Lipschitz network adaptive control approach that makes a nonlinear robot system, with possibly unknown dynamics, behave as a reference model,
\item deriving a condition that guarantees stability of the proposed approach by exploiting the Lipschitz properties of the LipNet module, and
\item experimentally verifying the efficacy of the proposed approach for bridging the model-reality gap by balancing an inverted pendulum on a quadrotor platform despite dynamics uncertainties.
\end{enumerate}

\section{RELATED WORK}
\label{sec:related_work}
The rich literature on model-based control approaches shows the effectiveness, safety, and simplicity of these techniques for cases when the dynamics model of system is accurate and the operating environment is static. The model-reality gap is a crucial factor that prevents traditional model-based approaches to be directly applicable to robot systems that are subject to uncertain dynamics and disturbances. One can think of three approaches to address the model-reality gap~\cite{brunke2021safe}:  \textit{(i)} robustness, \textit{(ii)} adaptation, and \textit{(iii)} anticipation.

The \emph{robustness} approach aims to design control laws that are stable for a range of unknown dynamics and disturbances that may affect the robotic system. Robust control approaches include, but are not limited to, sliding-mode control~\cite{chen2016robust}, robust MPC~\cite{limon2010robust}, $H_\infty$ control~\cite{lyu2018disturbance}, as well as more recent domain randomization techniques for sim-to-real transfer~\cite{peng2018sim}. While robust approaches typically guarantee stability and safety in the presence of unmodelled dynamics and disturbances, their performance can be conservative. 

In contrast, \emph{adaptation} approaches address the model-reality gap by adapting or learning online using data collected by the robot. Adaptive controllers such as MRAC~\cite{sastry2011adaptive} and $\mathcal{L}_1$ adaptive control~\cite{Hovakimyan2010} are fast and able to handle unmodelled dynamics. In order to further improve performance, learning-based controllers that leverage past experience are being proposed. Non-parametric approaches include learning-based controllers using Gaussian Processes (GPs)~\cite{ostafew2016learning}, which leverage past experience to learn a better system model, but can be computationally expensive resulting in a slow response to changes in the environment. While there are newer learning MPC approaches using Bayesian linear regression (BLR)~\cite{mckinnon2019}, formal guarantees are not given.

 \emph{Anticipation} approaches address the model-reality gap by learning offline. In DNN-based inverse control, a mapping from desired output to actual output is learned offline~\cite{zhou-ijrr20} and used to improve tracking performance of a quadrotor. In reinforcement learning (RL), latent variables that represent the environment are used to anticipate changes in the environment for off-policy RL~\cite{xie2020deep}. While anticipation approaches are effective for addressing the uncertainties for a broad range of systems, they typically lack the adaptivity to cope with changes during real-time execution.

Adaptive controllers handle unmodelled dynamics and disturbances without the need for conservative control laws or significant amounts of past experience to learn offline. Due to their expensiveness and fixed cost for online inference, neural networks (NNs) are emerging as attractive options for implementing adaptive frameworks on resource-constrained robot platforms. Neural networks have previously been used in online inverse control, but they suffered from a lack of robustness against disturbances~\cite{jordan1992forward} and the need for appropriate initialization in order to converge~\cite{chen1995adaptive}. They have also been used to relax the assumptions of conventional MRAC~(e.g., \cite{Lightbody1995}). However, earlier studies often use radial basis function (RBF) NNs, which require a sufficient preallocation of basis functions over the operating domain; the desired theoretical guarantees do not hold outside of the targeted operating domain. Recently, an asynchronous DNN MRAC framework was proposed to mitigate the limitation of RBF NNs by learning ``features'' at a slower timescale~\cite{joshi2019deep}; but, the approach only considers systems with additive input uncertainties. 

In this work, we consider a more general class of control-affine nonlinear systems and  leverage the expressiveness of DNNs to learn complex dynamic uncertainties. The stability of the adapted system is guaranteed by exploiting the Lipschitz properties of the Lipschitz network. We demonstrate our approach in flying inverted pendulum experiments.

\section{PROBLEM FORMULATION}
\label{sec:problem_formulation}
We consider robot systems whose dynamics can be represented in the following form: 
\begin{equation}
\label{eqn:actual_system}
 \begin{aligned}
    x_{a,k+1} &= f_a(x_{a,k}) + g_a(x_{a,k}) u_{a,k}\\
    y_{a,k} &= h_a(x_{a,k})\,,
 \end{aligned}
\end{equation}
where the subscript $a$ denotes the actual robot system, $k\in\mathbb{Z}_{\geq 0}$ is the discrete-time index, $x_a\in\mathbb{R}^n$ is the system state, $u_a\in\mathbb{R}$ is the system input, $y_a\in\mathbb{R}$ is the system output, and $f_a$, $g_a$, and $h_a$ are smooth nonlinear functions that are possibly unknown a-priori. Our goal is to design a learning-based control law such that the robot behaves as a reference model, which can subsequently be leveraged when designing the outer-loop controller or planner.

The reference model can have the following form:
\begin{equation}
\label{eqn:nonlinear_reference_system}
 \begin{aligned}
    x_{m,k+1} &= f_m(x_{m,k}) + g_m(x_{m,k}) u_{m,k}\\
    y_{m,k} &= h_m(x_{m,k})\,,
 \end{aligned}
\end{equation}
where the subscript $m$ denotes the reference model, the reference model state $x_m$, input $u_m$, and output $y_m$ are defined analogously as in~\eqref{eqn:actual_system}, and $f_m$, $g_m$, and $h_m$ are smooth nonlinear functions. Note that the reference model has a generic control-affine form. Practically, one could use a nonlinear reference model that best captures our prior knowledge about the robot system. Alternatively, to simplify the outer-loop controller design, one may choose a linear reference model
\begin{equation}
\label{eqn:linear_reference_system}
 \begin{aligned}
    x_{m,k+1} &= A_m x_{m,k} + B_m u_{m,k}\\
    y_{m,k} &= C_m x_{m,k}\,,
 \end{aligned}
\end{equation}
where $(A_m, B_m, C_m)$ are constant matrices with consistent dimensions, and use well-established linear control tools.  

We consider a control architecture as shown in Figure~\ref{fig:blockdiagram_detailed}. Without loss of generality, we assume that the inputs to the robot system and the reference model are
\begin{align}
\label{eqn:input_actual}
    u_{a,k} = u_{k} + \delta u_k \text{ and }
    u_{m,k} = u_{k},
\end{align}
where $u_{k}$ is the input command computed by the outer-loop model-based controller. The objective of model reference adaptation is to learn the input adjustment $\delta u_k$ such that the output of the robot system~\eqref{eqn:actual_system}, $y_{a,k}$, tracks the output of the reference system~\eqref{eqn:nonlinear_reference_system}, $y_{m,k}$.

We make the following assumptions: \textit{(i)} the dynamics of the robot system is minimum phase (i.e., has stable forward and inverse dynamics) and has a well-defined relative degree, and \textit{(ii)} the reference model is stable and has the same relative degree as the robot system. As discussed in~\cite{zhou-ijrr20}, the first assumption is necessary to safely apply an inverse dynamics learning approach and is satisfied by closed-loop stabilized robot systems such as quadrotors and manipulators. The second assumption is also not restrictive, as the relative degree of a robot system can be estimated from experiments or inferred from our prior knowledge~\cite{zhou-ijrr20}, and the reference model can be designed to satisfy this assumption. 

\edittext{Note that the choice of a reference model is generally problem dependent. For instance, it can be chosen to achieve a certain desired response or maximize the stability margin of the system. In practice, one would need to choose a reference model that is feasible for the robot to follow or robustly account for this factor in the outer-loop controller design.}

\section{METHODOLOGY}
\label{sec:main_results}
In this section, we present our proposed LipNet-based MRAC (LipNet-MRAC) approach to enforce a robot to behave as a predefined reference model. To facilitate our discussion, in Sec.~\ref{subsec:lipschitz_network}, we present a brief background on the LipNet~\cite{anil2019sorting}. In Sec.~\ref{subsec:model_reference_adaptive_network}, we derive an ideal model reference adaptation law based on the dynamics model of the robot system. In Sec.~\ref{subsec:online_learning}, we introduce an online algorithm to learn the model reference adaptation law with a LipNet when the robot dynamics are unknown, and in Sec.~\ref{subsec:stability_analysis}, we derive a Lispchitz condition that guarantees stability of the proposed LipNet-MRAC approach.

\subsection{Background on Lipschitz Networks}
\label{subsec:lipschitz_network}
In contrast to conventional feedforward networks whose Lipschitz constants are often difficult to estimate~\cite{fazlyab2019efficient}, LipNets have exact, predefined Lipschitz constants that the designer can choose freely~\cite{anil2019sorting}. Setting and knowing the Lipschitz constant is critical for guaranteeing stability of NN-based control frameworks~\cite{zhou-ijrr20,shi2019neural}. 

In this work, we consider an $M$-layer neural network $T_\theta(\xi)$ that can be expressed as follows:
\begin{equation}
\label{eqn:dnn}
    T_\theta(\xi) = W^{M}\sigma(W^{M-1}\sigma(\hdots\sigma(W^1{\xi}+b^{1}))+b^{M-1})+b^{M}\,,
\end{equation}
where $\xi$ is the input of the network, $\{W^1, W^2, ...,W^{M}\}$ are the weights matrices, $\{b^1, b^2,...,b^{M}\}$ are the bias vectors, $\theta$ denotes an augmented vector of the network weight and bias parameters, and $\sigma(\cdot)$ is the activation function.

Different from conventional networks, LipNets enforce exact Lispchitz constraints by ensuring that the input-output gradient norm is preserved by each linear and activation layer: $||J_l^Tz_l|| = ||z_l||$, where $z_l$ and $J_l$ are the input and the input-output Jacobian of layer $l$, and $||\cdot||$ is the Euclidean norm of a vector. To realize gradient norm preservation, \cite{anil2019sorting} proposes to \textit{(i)} orthonormalize the weight matrices in each linear layer such that the weight matrices have singular values of 1 exactly, and \textit{(ii)} use a gradient-preserving activation function \emph{GroupSort} that sorts the input to the hidden layer. More specifically, the \emph{GroupSort} activation function divides the input to the hidden layer into groups and sorts the values of each group in ascending or descending order. As an example, with full sort, we have $[1,2,3,4]^T = \text{GroupSort}([3,2,4,1]^T)$.

Since the \emph{GroupSort} activation function only permutates the inputs to the layer, the input-output gradient norm of the GroupSort layer is 1. By design, the overall network has a Lipschitz constant of~1. The 1-Lipschitz network can be extended to approximate a function with an arbitrary Lipschitz constant by scaling the output of the network by the desired Lipschitz constant~\cite{anil2019sorting}. In contrast to spectral normalization approaches, where the weight matrices of the network are scaled by their spectral norms, LipNets have exact Lipschitz constants, which reduces the conservatism for imposing Lipschitz constraints.

\subsection{Model Reference Adaptive Law}
\label{subsec:model_reference_adaptive_network}
In this subsection, using the representations of the robot system~\eqref{eqn:actual_system} and the reference model~\eqref{eqn:nonlinear_reference_system}, we derive the model reference adaptive law to be approximated by the LipNet.

To facilitate our discussion, we introduce the notion of system relative degree. We define $f_a\circ g_a$ as the composition $f_a(g_a(\cdot))$ of the functions $f_a$ and $g_a$, and $f_a^{i}$ as the $i$th composition of the function $f_a$ with $f_a^0(x) = f_a(x)$ and $f_a^i(x) = f_a^{i-1}(x) \circ f_a(x)$. As discussed in~\cite{zhou-ijrr20}, a nonlinear system~\eqref{eqn:actual_system} is said to have a relative degree of $r$, if $r$ is the smallest integer such that $\frac{\partial}{\partial u_{a}} h_a \circ f_a^{r-1} (f_a(x) + g_a(x)u_a(x))\neq 0$ in a neighbourhood of an operating point $(\bar{x}_{a}, \bar{u}_{a})$. Intuitively, for a discrete-time system, the relative degree $r$ defines the number of sample delays between applying an input $u_a$ to the system and seeing a corresponding change in the output~$y_a$.

By leveraging the definition of relative degree, we can relate the input $u_a$ and output $y_a$ of the robot system~\eqref{eqn:actual_system}:
\begin{align}
\label{eqn:input-output-actual}
y_{a,k+r} &= \mathcal{F}_a(x_{a,k}) + \mathcal{G}_a(x_{a,k}) u_{a,k},
\end{align}
where $\mathcal{F}_a(x_{a,k}) = h_a \circ f_a^{r}(x_{a,k})$ and $\mathcal{G}_a(x_{a,k}) = \frac{\partial}{\partial u_{a,k}} h_a \circ f_a^{r-1}(f_a(x_{a,k})+g_a(x_{a,k})u_{a,k})$. This input-output relationship allows us to predict the future output of the robot system $y_{a,k+r}$ based on the current input $u_{a,k}$ and state $x_{a,k}$. 

By assuming that the reference model is designed to have the same relative degree~$r$, we can similarly derive the input-output equation of the reference system:
\begin{align}
	y_{m,k+r} &= \mathcal{F}_m(x_{m,k}) + \mathcal{G}_m(x_{m,k}) u_{m,k},
\end{align}
where $\mathcal{F}_m(x_{m,k})$ and $\mathcal{G}_m(x_{m,k})$ are defined analogously to $\mathcal{F}_a(x_{a,k})$ and $\mathcal{G}_a(x_{a,k})$ for~\eqref{eqn:actual_system}. For a linear reference system~\eqref{eqn:linear_reference_system}, the input-output equation reduces to:
\begin{align}
    y_{m,k+r} &= \mathcal{A}_m x_{m,k} + \mathcal{B}_mu_{m,k},
\end{align}
where $\mathcal{A}_m = C_mA_m^r$ and $\mathcal{B}_m = C_mA^{r-1}_mB_m$.

Recall the architecture in Figure~\ref{fig:blockdiagram}, where the inputs to the robot system and the reference model are defined by \eqref{eqn:input_actual}. In order to make the robot system behave like the reference model, we enforce the outputs of the robot system and the reference model to be identical. In particular, by setting $y_{a,k+r} = y_{m,k+r} $ and solving for $\delta u_k$, one can show that the ideal input adjustment $\delta u_k$ for model reference adaptation is 
\begin{align}
%    \delta u_k = \mathcal{G}^{-1}_a(x_{a,k}) \left(\mathcal{A}_m x_{m,k} +\mathcal{B}_mu_{m,k} -\mathcal{F}_a(x_{a,k})\right) - u_{k}.
    \delta u_k =\frac{\Fm - \Fa + \Gm\: u_k}{\Ga} - u_k.
    \label{eqn:adaptation_law}
\end{align}

When the robot dynamics is unknown, we can treat the ideal adjustment~\eqref{eqn:adaptation_law} as a nonlinear function that maps from the robot system state~$x_{a,k}$, the reference model state~$x_{m,k}$, and the input signal $u_{k}$ to the input adjustment $\delta u_k$:
\begin{align}
\delta u_k = T_\theta(x_{a,k}, x_{m,k}, u_{k}).
\end{align}

\subsection{Online Learning of the Model Reference Adaptation Law}
\label{subsec:online_learning}
In this subsection, we outline an online learning algorithm to discover the ideal adaptation law~\eqref{eqn:adaptation_law} \emph{via} a Lipschitz network when the robot dynamics~\eqref{eqn:actual_system} is not known. 

We define the performance error of the neural network as the difference between the output of the robot system~\eqref{eqn:actual_system} and the output of the reference model~\eqref{eqn:nonlinear_reference_system}: $E_{k} = y_{m,k+r} - y_{a,k+r}$. At each time step~$k$, the parameters of the Lipschitz network are updated to minimize the squared error cost function:
\begin{align}
\label{eqn:cost_function}
\mathcal{J}_{k} = \frac{1}{2} E_{k}^2 = \frac{1}{2}\left(y_{m,k+r} - y_{a,k+r} \right)^2.
\end{align}

We use the following gradient-based approach to update the network parameters, $\theta_{k+1} =\theta_k  + \Delta \theta_k$. The change in the network parameters is:
\begin{align}
\label{eqn:delta_theta}
%\Delta \theta_k = -\lambda\nabla_\theta \mathcal{J}_k=\lambda \nabla_{u_{a,k}} y_{a,k+r} \nabla_\theta T_\theta |_{x_{a,k}, x_{m,k}, u_{m,k}}E_k,
\Delta \theta_k = -\lambda\nabla_\theta \mathcal{J}_k=\lambda H_k G_k E_k,
\end{align}
where $\lambda > 0$ is the learning rate, $G_k = \nabla_\theta T_\theta |_{x_{a,k}, x_{m,k}, u_{m,k}}$ is the gradient of the network output with respect to its parameters evaluated at $(x_{a,k}, x_{m,k}, u_{m,k})$, and $H_k = \nabla_{u_{a,k}} y_{a,k+r}$ is the input-output gradient of the robot system.
 
To realize the online adaptation law~\eqref{eqn:delta_theta}, we need to predict the system output $y_{a,k+r}$ and estimate the input-to-output gradient $\nabla_{u_{a,k}} y_{a,k+r}$. Similar to~\cite{jordan1992forward,zhou-ecc19}, we can simultaneously learn a forward model for the robot system to estimate $y_{a,k+r}$ and $\nabla_{u_{a,k}} y_{a,k+r}$ (see \eqref{eqn:input-output-actual}):
\begin{remark}[Forward Model Learning~\cite{zhou-ecc19}]
At time $k$, one can construct a paired dataset with inputs $\{y_{a,p-r}, u_{a,p-r}\}$ and outputs $\{y_{a,p}\}$ based on the latest $N$ time steps $p=\{k-N,...,k\}$ and use standard supervised learning to train a forward model (e.g., a BLR model) as a local approximator of~\eqref{eqn:input-output-actual}. The model can be then used to estimate $y_{a,k+r}$ and $\nabla_{u_{a,k}} y_{a,k+r}$ by setting the input to $(x_{a,k}, u_{a,k})$.
\end{remark}

We note that inaccuracies in the forward dynamics model could, in general, lower the adaptation performance but will not jeopardize the stability of the adapted system. As will be shown in Sec.~\ref{subsec:stability_analysis}, the stability of the proposed LipNet-MRAC approach is guaranteed if the Lipschitz constant of the LipNet satisfies a small-gain-type condition.
 
In the case where a prediction model is not available, one could still apply the proposed algorithm for model reference adaptation but with a sample delay of $r$ steps, which is typically a small integer for robot systems such as quadrotors. For a linear system, $\nabla_{u_{a,k}} y_{a,k+r}$ is a constant that can be factored into the learning rate $\lambda$ as a tuning parameter, and its estimation is not required. 

\subsection{Stability Analysis}
\label{subsec:stability_analysis}
In this subsection, we provide stability guarantees of the system including the model reference adaptation law by exploiting the Lipschitz property of the learning module. In the stability analysis, we make the following assumptions:
\begin{itemize}
\item[(A1)] The state of the robot system can be bounded by $||x_a||_{l}\le  \gamma||u_a||_{l} + \beta$, where $\gamma$ and $\beta$ are positive constant scalars, $||\cdot||_{l}$ denotes the $l_2$ signal norm, and the variables without the subscripts $k$ denote the corresponding signals.
\item[(A2)] The input adjustment computed by the adaptation module satisfies $\delta u_k = 0$ for $(x_{a,k}, x_{m,k}, u_k) = (0, 0, 0)$. %\textcolor{blue}{(Alternatively, assume the network bias is upper bounded.)}
\item[(A3)] The state of the reference system $x_{m}$ is bounded (i.e., $||x_m||_{l}< \infty$).
\end{itemize}

Assumption~(A1) holds for finite-gain $l_2$ stable systems and is common assumption in small-gain-type theorems, which are the basis of the proof presented below. The scalar $\gamma$ is an upper bound on the input-to-state gain of the robot system, and the scalar $\beta$ is a constant value associated with the initial state of the robot system. As shown in the proof below, $\beta$ affects the upper bound on the state of the system but does not impact the stability of the adapted system. Assumption~(A2) is true for any robot and reference systems satisfying $\mathcal{F}_a(0)=0$ and $\mathcal{F}_m(0)=0$. This condition is not restrictive and can be practically enforced by removing the bias vectors from the LipNet architecture. Assumption~(A3) can be satisfied by a proper choice of stable reference system.

\begin{theorem}
Consider the proposed LipNet-MRAC approach shown in Figure~\ref{fig:blockdiagram} (grey box). Under assumptions (A1)-(A3), the dynamics of the adapted system from $u$ to $x_a$ is finite-gain~$l_2$ stable if $L <1/\gamma$, where $L$ is the Lipschitz constant of the LipNet, which we are free to choose, and $\gamma$ is an upper bound on the input-to-state gain of the robot system.
\end{theorem}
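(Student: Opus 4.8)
The plan is to run a discrete-time small-gain argument that closes the loop between the robot system's input-to-state gain (A1) and the gain of the LipNet adaptation block. The central observation is that, because the LipNet $T_\theta$ is $L$-Lipschitz and vanishes at the origin by (A2), its output obeys a pure gain bound with no additive offset:
\begin{align}
|\delta u_k| = |T_\theta(x_{a,k}, x_{m,k}, u_k) - T_\theta(0,0,0)| \le L\left(\|x_{a,k}\| + \|x_{m,k}\| + |u_k|\right),
\end{align}
where I split the Euclidean norm of the concatenated input by the triangle inequality. Taking the $l_2$ signal norm over the trajectory and applying Minkowski's inequality termwise then yields $\|\delta u\|_{l} \le L\left(\|x_a\|_{l} + \|x_m\|_{l} + \|u\|_{l}\right)$. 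Note the loop is causally well-posed, since $\delta u_k$ depends on the current measured state $x_{a,k}$ and $u_k$ rather than on $x_{a,k+1}$, so no algebraic loop arises.

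First I would use $u_{a} = u + \delta u$ together with the preceding bound to control the actual input, $\|u_a\|_{l} \le (1+L)\|u\|_{l} + L\|x_a\|_{l} + L\|x_m\|_{l}$. Next I would substitute this into the input-to-state bound (A1), $\|x_a\|_{l} \le \gamma\|u_a\|_{l} + \beta$, which produces an inequality where $\|x_a\|_{l}$ appears on both sides. Collecting those terms gives $(1 - \gamma L)\|x_a\|_{l} \le \gamma(1+L)\|u\|_{l} + \gamma L\|x_m\|_{l} + \beta$. Finally, invoking the hypothesis $L < 1/\gamma$ makes the factor $1 - \gamma L$ strictly positive, so I can divide through to obtain
\begin{align}
\|x_a\|_{l} \le \frac{\gamma(1+L)}{1 - \gamma L}\,\|u\|_{l} + \frac{\gamma L\|x_m\|_{l} + \beta}{1 - \gamma L}.
\end{align}
By (A3) the term $\|x_m\|_{l}$ is finite, so both the gain coefficient and the additive constant are finite, which is exactly the definition of finite-gain $l_2$ stability from $u$ to $x_a$.

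The step I expect to require the most care is the passage from the pointwise Lipschitz estimate to the signal-norm estimate on $\delta u$, since it relies on interpreting $T_\theta$ as an operator acting on the full input signal and on the termwise use of Minkowski's inequality; one must also check that the three-block triangle-inequality split does not inflate the effective gain on $x_a$ beyond $L$, as otherwise the condition $L < 1/\gamma$ would no longer suffice to make $1-\gamma L$ positive. The remaining algebra is routine, and the role of (A2) is simply to remove any constant offset from the LipNet bound so that $\beta$ (the initial-condition term) is the only additive constant, confirming the text's remark that $\beta$ shapes the bound but does not affect stability.
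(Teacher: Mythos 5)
Your proposal is correct and follows essentially the same small-gain argument as the paper's proof: bound $\delta u$ in the $l_2$ signal norm using the Lipschitz constant and (A2), substitute into the input-to-state bound (A1), collect the $\|x_a\|_{l}$ terms, and divide by $1-\gamma L$. The only cosmetic difference is that you apply the triangle inequality pointwise before taking signal norms (via Minkowski), whereas the paper takes the signal norm of the concatenated input $\xi$ first and then splits it; the two orderings yield the same bound.
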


\begin{proof}
By assumption (A1), the state of the robot system can be bounded as follows: $||x_a||_{l}\le  \gamma||u_a||_{l} + \beta = \gamma||u + \delta u||_{l} + \beta \le \gamma||u||_{l} +\gamma|| \delta u||_{l} + \beta$. 
Moreover, by assumption (A2) and the Lipschitz property of the LipNet,  
at any instance, the input adjustment computed by the LipNet can be bounded as $||\delta u_k|| \le  L||\xi_k||$, where $L$ is the Lipschitz constant of the network, and $\xi = [x_a^T, x_m^T, u]^T$ denotes the network input. It follows that $||\delta u||_{l}=\left(\sum_{k=0}^\infty ||\delta u_k||^2\right)^{1/2} \le \left(\sum_{k=0}^\infty L^2 ||\xi_k||^2 \right)^{1/2} = L||\xi||_{l}\le L||x_a||_{l} + L||x_m||_{l} + L||u||_{l}$. 
Using the upper bound on $||\delta u||_{l}$, we obtain $||x_a||_{l}\le  \gamma(1+ L)||u||_{l} +\gamma L||x_a||_{l} + \gamma L||x_m||_{l} + \beta$. It can be shown that, if $L <1/\gamma$ is satisfied, the state of the robot system can be bounded by $||x_a||_{l}\le  \big( \gamma(1+L)||u||_{l} + \gamma L||x_m||_{l} + \beta \big)/(1-\gamma L)$. Since, by assumption (A3), $||x_m||_{l}$ is bounded, the dynamics of the adapted system from $u$ to $x_a$ is finite-gain $l_2$ stable~\cite{khalil2002nonlinear} (cf. Figure~\ref{fig:blockdiagram_detailed}).
\end{proof}

Theorem~1 provides an upper bound on the Lipschitz constant of the adaptive network module to guarantee stability. This Lipschitz condition can be enforced via the architecture design of the LipNet (Sec.~\ref{subsec:lipschitz_network}). To enforce the Lipschitz condition, we require an estimate of the upper bound of the system gain~$\gamma$, which can be estimated from system input-output data~\cite{van2007data,james1995numerical} or chosen conservatively based on our prior knowledge of the system. Overestimating~$\gamma$ will lead to a smaller, more conservative Lipschitz constant for the LipNet, but the overall adapted system will remain stable.

\section{SIMULATION EXAMPLE}
\label{sec:simulation_results}
In this section, we present a numerical example to illustrate the proposed LipNet-MRAC approach.

We consider the following system: 
\begin{equation}
\label{eqn:simulation_system_a}
\begin{aligned}
x_{a,k+1} &= \begin{bmatrix}
1 & T\\-T&1-T
\end{bmatrix} x_{a,k} + d(x_{a,k}) + \begin{bmatrix}
0 \\ 0.6T
\end{bmatrix} u_{k},\\ y_{a,k} &= \begin{bmatrix}
1 & 1
\end{bmatrix} x_{a,k},
\end{aligned}
\end{equation}
where $d( x_{a,k}) = 0.1T \begin{bmatrix}
x_{a,k,1}\sin(x_{a,k,1}),  & 0
\end{bmatrix}^T$
with $T=0.01$ and $x_{a,k,1}$ being the first element of $x_{a,k}$. The gain of system~\eqref{eqn:simulation_system_a} has an upper bound of $\gamma = 1.12$. \edittext{The system~\eqref{eqn:simulation_system_a} has a relative degree of 1.} The reference model is
\begin{equation}
\label{eqn:simulation_reference_system}
\begin{aligned}
x_{m,k+1} &=  \begin{bmatrix}
1 & T \\ -0.25T & 1 -T
\end{bmatrix} x_{m,k} +  \begin{bmatrix}
0\\T \end{bmatrix}u_{m,k},\\
y_{m,k} &= \begin{bmatrix}
0.25 & 0.25
\end{bmatrix} x_{m,k}\,.
\end{aligned}
\end{equation}
\edittext{The reference system~\eqref{eqn:simulation_reference_system} also has a relative degree of 1.}

\begin{figure}
\centering
\vspace{0.5em}
\includegraphics[trim=0cm 0cm 0cm 0cm, width=\columnwidth]{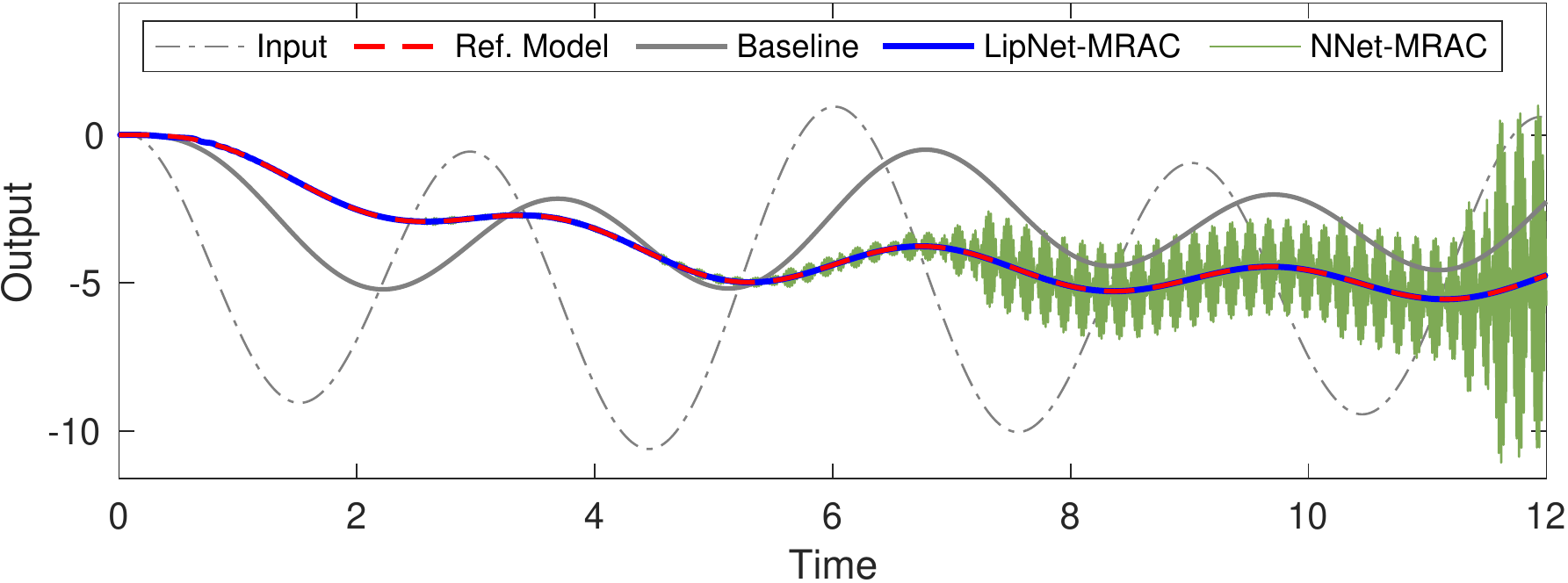}
\caption{The proposed LipNet-MRAC approach effectively enforces the input-output response of system~\eqref{eqn:simulation_system_a} to behave as the the reference model~\eqref{eqn:simulation_reference_system} (blue and red). The baseline response without adaptation is shown in grey. In contrast, with the conventional NNet-MRAC (green), closed-loop stability is not guaranteed. This simulation corresponds to one test input trajectory $u_k = \sin\left(\frac{2\pi }{5}kT\right) + 5 \cos\left(\frac{2\pi }{3}  kT\right) - 5$ with $T=0.01$. The learning rate is set to 33 for both approaches.}
\label{fig:simulation_trajctory}
\end{figure}

\begin{figure}
\centering
% idea_development/numerical-example/paper_plot_mrac_summary.m
\includegraphics[trim=0cm 0cm 0cm 0cm, width=\columnwidth]{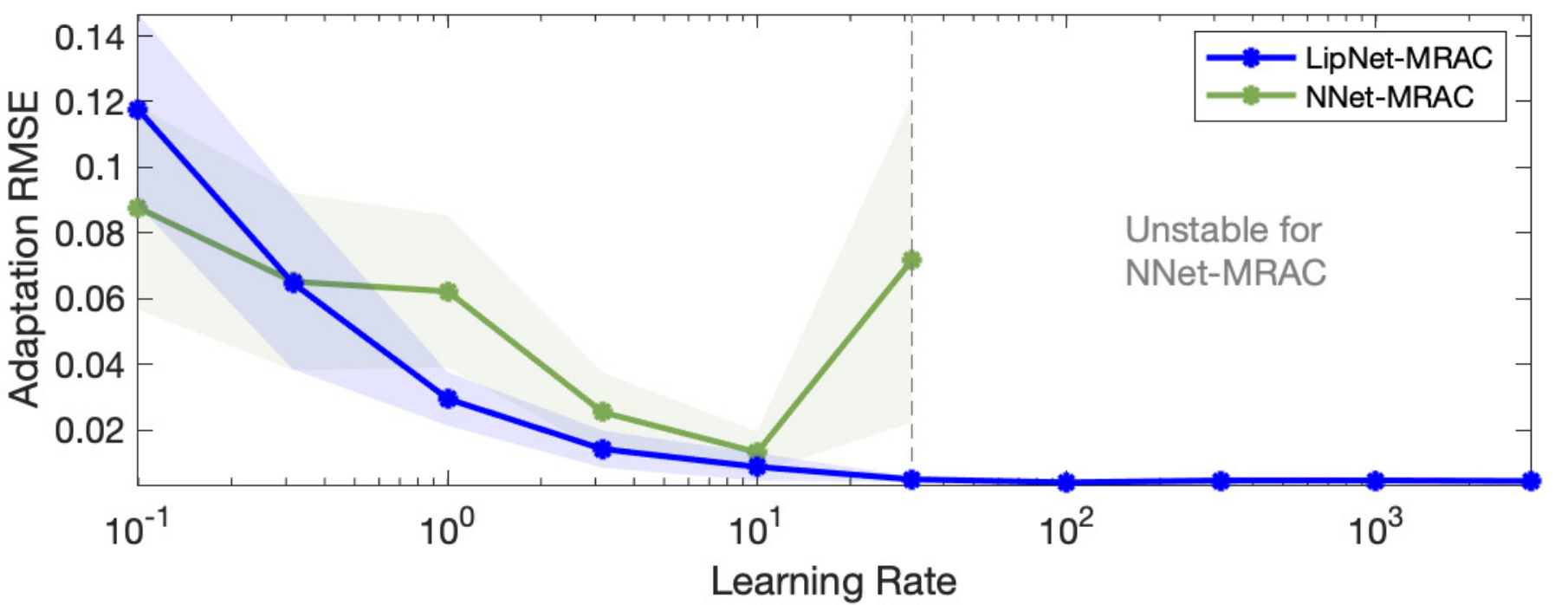}
\caption{The performance of the proposed LipNet-MRAC approach and the NNet-MRAC on one test trajectory when different learning rates are used. Using the LipNet-MRAC approach, we can always guarantee stability, and the adaptation performance asymptotically approaches a lower bound as the learning rate increases. However, with the NNet-MRAC approach, there is an ideal learning rate that needs to be carefully chosen, which can be challenging to find when we do not know the robot dynamics a-priori. The solid lines and the shades show the means and one standard deviations for ten trials with different initial network parameters.}
\label{fig:simulation_learning_rate_cases}
\reducefiggap
\end{figure}

Our goal is to design an adaptive module such that the system output~\eqref{eqn:simulation_system_a} tracks the output of the reference model~\eqref{eqn:simulation_reference_system}. In the discussion below, we first illustrate the efficacy of using the proposed adaptive LipNet-MRAC approach to make a nonlinear system~\eqref{eqn:simulation_system_a} behave as a linear reference system~\eqref{eqn:simulation_reference_system} without knowing the dynamics model of the nonlinear system a-priori. We then show the benefit of using the proposed LipNet-MRAC approach by comparing it to a learning-based MRAC approach with a conventional feedforward network architecture (NNet). Both the LipNet and the NNet have a depth and width of 3 and 20. The LipNet has FullSort hidden layers and orthogonalized linear layers~\cite{anil2019sorting}, while the NNet has $\tanh$ hidden layers and standard linear layers. The same adaptation scheme~(Sec.~\ref{subsec:online_learning}) is applied to update the network parameters. The initial parameters of the networks are randomly sampled from the standard normal distribution. We compare the two approaches over ten randomly-initialized trials. To satisfy Theorem~1 with the proposed LipNet-MRAC approach, the Lipschitz constant of the LipNet is set to $1/\gamma = 0.89$ to guarantee stability.

Figure~\ref{fig:simulation_trajctory} shows the response of the system~\eqref{eqn:simulation_system_a} when using \textit{(i)} the proposed LipNet-MRAC approach, and \textit{(ii)} a learning-based MRAC approach with a conventional feedforward network architecture (abbreviated as NNet). By comparing the baseline response of system~\eqref{eqn:simulation_system_a} (grey line) and the response of system~\eqref{eqn:simulation_system_a} with the adaptive LipNet (blue line), we can see that the proposed approach effectively enforces the dynamics of system~\eqref{eqn:simulation_system_a} to behave as the reference model (red dashed line) as desired. With the conventional NNet (green line), stability is not guaranteed.

Figure~\ref{fig:simulation_learning_rate_cases} compares the adaptation error when different learning rates are used with NNet and the proposed LipNet. For each learning rate, the plot shows the mean and standard deviation of the root-mean-square (RMS) error over ten randomly-initialized trials. NNet has one ideal learning rate for which the mismatch between the system and the reference model is the lowest. 
Searching for this ideal learning rate requires trial-and-error and the system can be destabilized for higher values. 
For the LipNet-MRAC approach, the stability of the system is not jeopardized, regardless of the chosen learning rate. \edittext{Higher learning rates generally allow for faster adaptation to any mismatches between the reference model and the robot system.}
As a result, the adaptation RMSE for the LipNet-MRAC case asymptotically approaches an ideal value as the learning rate increases. 
By encoding the Lipschitz condition (Theorem~1) in the LipNet design, we can safely increase the learning rate for faster adaptation while guaranteeing stability a-priori despite network parameter initialization. 

\section{EXPERIMENTAL RESULTS}
\label{sec:experimental_results}
We demonstrate the proposed LipNet-MRAC approach through flying inverted pendulum experiments. A video of the quadrotor experiments presented in this section can be found here: \url{http://tiny.cc/lipnet-pendulum} 
\begin{figure}
    \centering
    \vspace{0.5em}
    \includegraphics[width=0.4\columnwidth]{./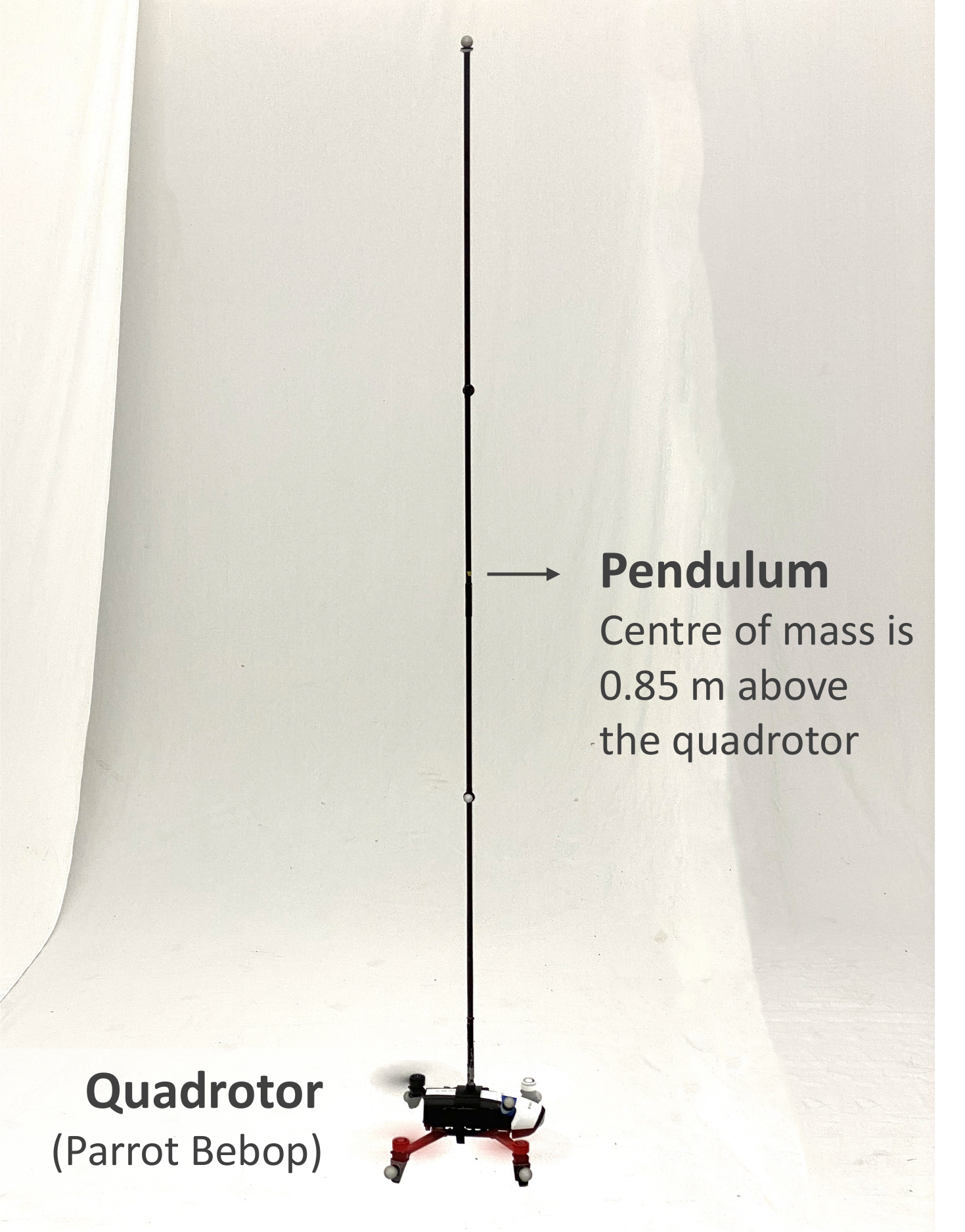}
    \vspace{-0.5em}
    \caption{We demonstrate our proposed approach using a flying inverted pendulum, where a quadrotor (Parrot Bebop) balances a pendulum while hovering at a fixed point or tracking a trajectory. }
    \label{fig:experiment_setup}
    \reducefiggap
\end{figure}

\subsection{Experimental Setup}
The goal of the experiment is to stabilize an inverted pendulum on a quadrotor vehicle (the Bebop) while hovering and tracking a trajectory in the $xy$-plane. The state of the quadrotor consists of the translational positions of its centre of mass (COM) $(x,y,z)$, the translational velocities $(\dot{x}, \dot{y}, \dot{z})$, the roll-pitch-yaw Euler angles $(\phi, \theta, \psi)$, and the angular velocities $(\omega_x, \omega_y, \omega_z)$. We model the pendulum as a point mass~\cite{hehn2011flying}. To capture the dynamics of the pendulum, we define four additional states $(r, s, \dot{r}, \dot{s})$, which correspond to the positions and velocities of the COM of the pendulum relative to the positions and velocities of the COM of the quadrotor along the $x$ and the $y$ axes---the pendulum is balanced in the upright position when both $r$ and $s$ are zero. An illustration of the experimental setup is shown in Figure~\ref{fig:experiment_setup}.

By assuming that the quadrotor is stabilized at a constant height (i.e., $\dot{z} = 0$), we can represent the translational dynamics of flying inverted pendulum system in the form below:
\begin{align}
\label{eqn:quad_pend_dyn_1}
   \mathbf{x}_{a,k+1} = f_a(\mathbf{x}_{a,k}) + g_a(\mathbf{x}_{a,k})\mathbf{a}_{a,k},
\end{align}
where the state $\mathbf{x}_a = (x_a,\dot{x}_a,r_a,\dot{r}_a,y_a,\dot{y}_a,s_a,\dot{s}_a)$ is an augmentation of the pertinent states of the quadrotor and the pendulum, and the input $\mathbf{a}_a = (a_{a,x}, a_{a,y})$ is the actual acceleration of the quadrotor~\cite{hehn2011flying}. 

\begin{figure}
    % plotting script: analysis_scripts/plot_model_ref_traj.m
    \centering
    \vspace{0.5em}
    \includegraphics[trim=0cm 0cm 0cm 0cm, width=\columnwidth]{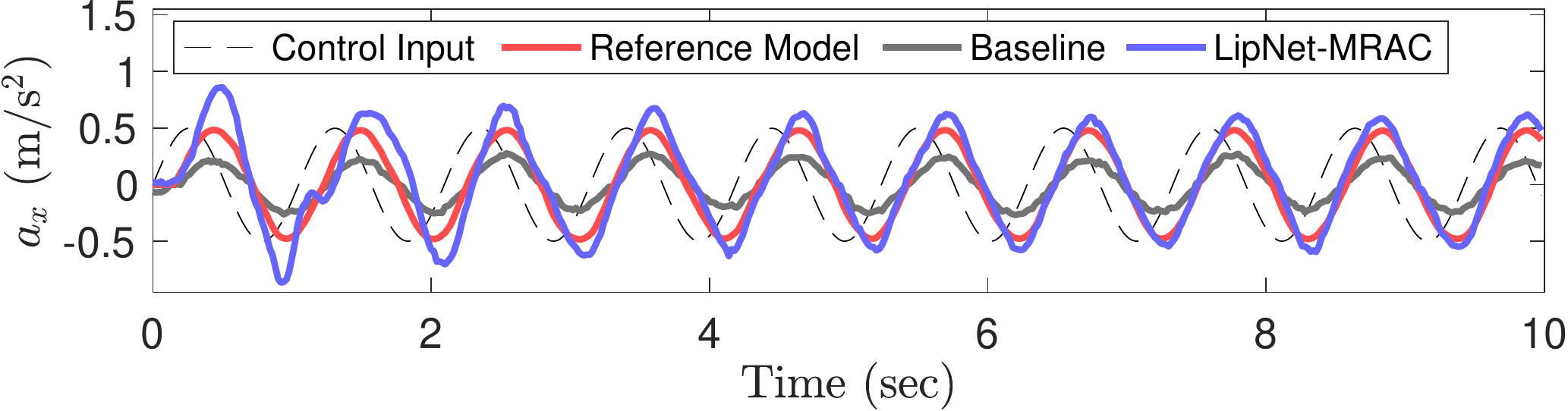}
    \caption{Given the \edittext{control input} (dashed line), the proposed LipNet-MRAC approach allows the actual acceleration of the quadrotor (blue) to closely follow the output of the reference model (red). The response of the baseline system without adaptation is shown in grey. A similar result is observed for the acceleration tracking in the $y$-direction.
    }
    \label{fig:trajectory_tracking}
        \reducefiggap
\end{figure}

\begin{figure}
    % plotting script: analysis_scripts/plot_similarity_summary.m
    \centering
    \includegraphics[trim=0.6cm 0cm 1.55cm 0.5cm, width=0.85\columnwidth]{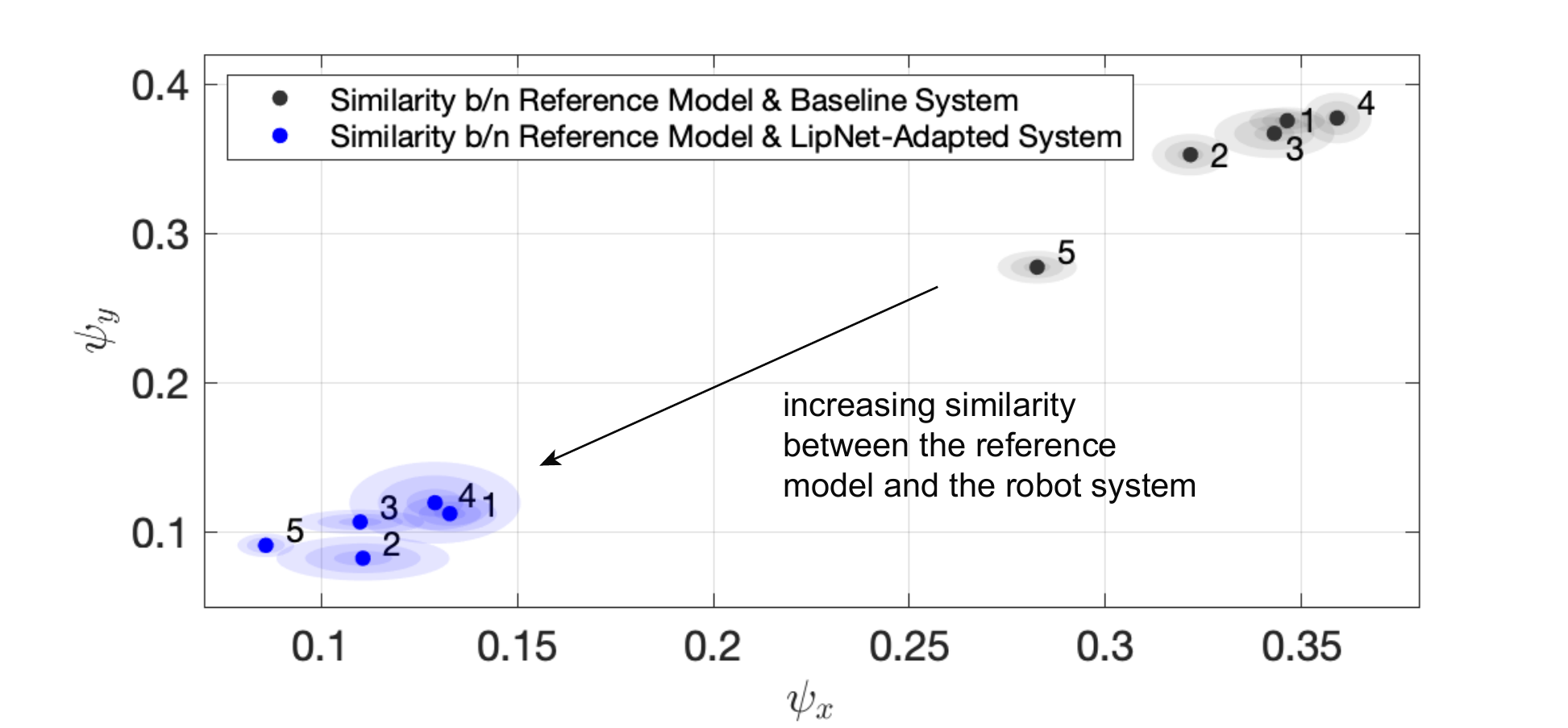}
    \caption{The proposed LipNet-MRAC approach can effectively enforce the robot system to behave as the randomly selected reference models. The plot shows a comparison of the system similarity between the five reference models and \emph{(i)} the system with the baseline controller (grey), and \emph{(ii)} the system with the proposed LipNet-MRAC module (blue). Smaller values of $\psi_x$ and $\psi_y$ indicate a higher system similarity between the reference model and the system in terms of the $\nu$-gap metric~\cite{zhou1998essentials}; 
    the dots and shaded areas in the plot correspond to the means and 3$\sigma$ error bounds of the $\nu$-gap estimates obtained based on the algorithm outlined in~\cite{sorocky-icra20}. 
    }
    \label{fig:similarity_estimates}
        \reducefiggap
\end{figure}

% \subsubsection{Control Approach} 
To design a stabilizing controller for the quadrotor-pendulum system in~\eqref{eqn:quad_pend_dyn_1}, one could first design a controller to compute the required acceleration of the quadrotor to stabilize the quadrotor-pendulum dynamics~\eqref{eqn:quad_pend_dyn_1} and then use an inner-loop attitude controller to ensure that the desired acceleration is achieved~\cite{hehn2011flying}. However, in our experiments, we do not have access to the attitude control of the off-the-shelf quadrotor. We instead apply the proposed LipNet-MRAC approach outside of the attitude control loop to make the acceleration dynamics of the quadrotor behave as a predefined reference model:
\begin{align}
\label{eqn:acceleration_ref_model}
\mathbf{a}_{m,k+1} = A_m \mathbf{a}_{m,k} + B_m \mathbf{u}_{m,k},
\end{align}
where $\mathbf{u}_m = [u_{m,x}, u_{m,y}]^T$ is the acceleration command. The reference model is then incorporated into the overall quadrotor-pendulum dynamics model as an extended system:
\begin{align}
\label{eqn:quad_pend_dyn_extended}
    \boldsymbol{\xi}_{a,k+1} =\begin{bmatrix}f_a(\mathbf{x}_{a,k}) + g_a(\mathbf{x}_{a,k})\mathbf{a}_{a,k}\\ A_m \mathbf{a}_{a,k} \end{bmatrix}+ \begin{bmatrix}
        \mathbf{0}\\B_m 
    \end{bmatrix}\mathbf{u}_{k},
\end{align}
where $\boldsymbol{\xi}_{a}=[\mathbf{x}_{a}^T, \mathbf{a}_{a}^T]^T$ is the state of the extended system, and the input $\mathbf{u}$ is the acceleration command of the quadrotor. \edittext{Note that, following~\cite{zhou-ijrr20}, we can estimate the relative degree of an uncertain robot system from simple experiments. In our case, the robot system and the reference model have a relative degree of~1.
}

Given the model in~\eqref{eqn:quad_pend_dyn_extended}, we can use a standard model-based controller to design a feedback control law for stabilizing the quadrotor-pendulum system. In this work, we use a standard linear quadratic regulator (LQR) of the form $\mathbf{u}_{k} = \mathbf{K}\boldsymbol{\widetilde{\xi}}_{a,k}$, where $\mathbf{K}$ is the controller gain designed based on~\eqref{eqn:quad_pend_dyn_extended}, $\boldsymbol{\widetilde{\xi}}_{a,k}$ is the error in the extended state relative to a desired state, which is constant for stabilization tasks and time-varying for tracking tasks. Note that, to compensate for the input-output delay present in the quadrotor system, we introduced a lead compensator with a forward prediction in the closed-loop system. Similar to the LQR controller, the parameters of the lead compensator are determined based on~\eqref{eqn:quad_pend_dyn_extended}. 

To ensure that the acceleration dynamics of the quadrotor follow the reference model~\eqref{eqn:acceleration_ref_model},  we assume decoupled quadrotor acceleration dynamics in the $x$- and $y$-directions and use the proposed LipNet-MRAC approach outlined in Sec.~\ref{sec:main_results}. In the experiments, the adaptive LipNets have  depths of 3 and widths of 20. By observing the input-output responses of the baseline quadrotor attitude controller on a set of sinusoidal trajectories, the quadrotor system gain $\gamma$ is estimated to be 0.68. Based on Theorem~1, we conservatively set the Lipschitz constant of the LipNets to 0.8. To train the LipNet online, we simultaneously fit a local BLR model to approximate the forward acceleration dynamics. The parameters of the LipNet are updated to minimize the cost~\eqref{eqn:cost_function} with $\lambda = 0.8$. 

With the proposed LipNet-MRAC, the acceleration command from the LQR controller is adjusted by the adaptive LipNet (Figure~\ref{fig:blockdiagram_detailed}) and the overall acceleration command sent to the quadrotor is $\mathbf{u}_{a,k}=\mathbf{u}_{k}+\delta\mathbf{u}_{k}$, where $\delta\mathbf{u}_{k}$ is the adjustment computed by the LipNet. Using the Euler parameterization of the attitude angles, the acceleration command $\mathbf{u}=[u_{x}, u_{y}]^T$ is converted to the attitude commands based on the following transformations: $\theta_c = \arctan \left(u_{x}/g\right)$ and $\phi_c = \arctan\left(-u_{y}/\sqrt{u_{x}^2+g^2}\right)$, where $\theta_c$ and $\phi_c$ are the commanded pitch and roll angles, and $g$ is the acceleration due to gravity. The attitude commands are sent  to the Bebop quadrotor onboard controller at a rate of 50~Hz. 

Our experiments consist of \textit{(i)} verifying efficacy of the proposed LipNet-MRAC for making the quadrotor system behave as a reference model and \textit{(ii)} demonstrating the LipNet-MRAC in closed-loop control for a flying inverted pendulum.
\begin{figure}
    \centering
    \vspace{1em}
    % plotting script: analysis_scripts/plot_model_ref_stab.m
    \begin{subfigure}{.5\textwidth}
    \centering
    \includegraphics[trim=0cm 0cm 0cm 0cm, width=\columnwidth]{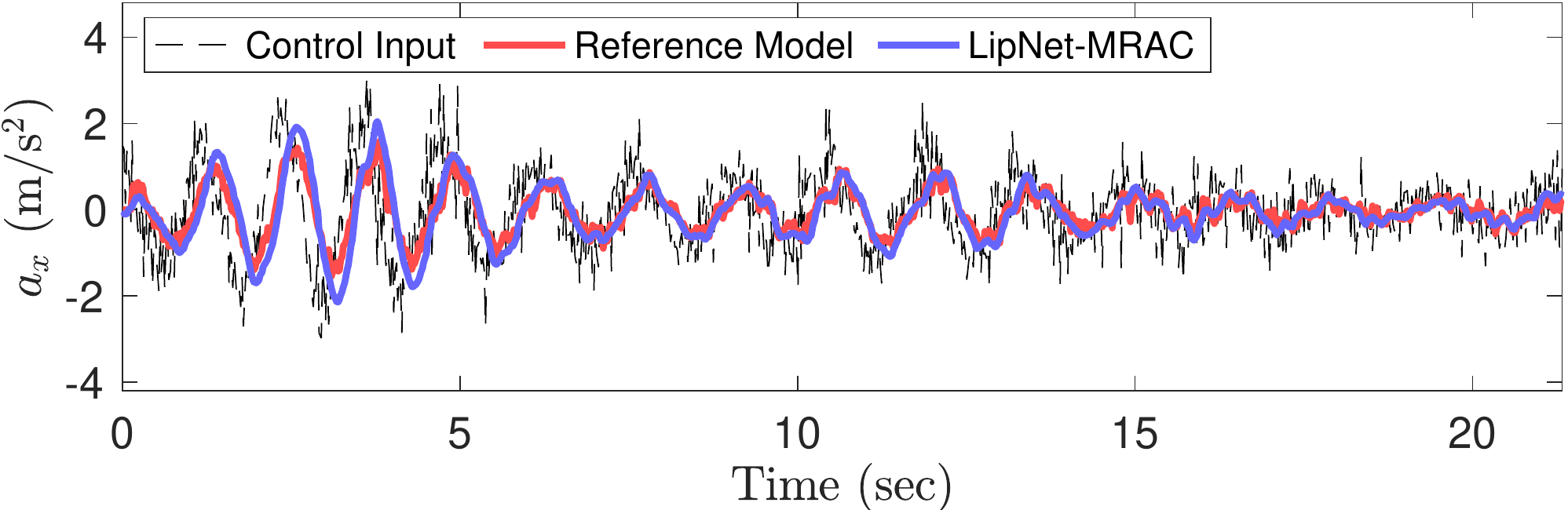}
    \caption{Performance validation of the underlying LipNet-MRAC module. The actual acceleration of the quadrotor (blue) closely follows the output acceleration of the reference model (red). Similar result is observed for $a_y$. The control input signal (black) is generated by the high-level LQR controller. }
    \label{fig:pendulum_tracking_acc_stab}
    \end{subfigure}\\[1em]
    \begin{subfigure}{.5\textwidth}
    % plotting script: analysis_scripts/plot_model_ref_stab.m
    \centering
    \includegraphics[trim=0cm 0cm 0cm 0cm, width=\columnwidth]{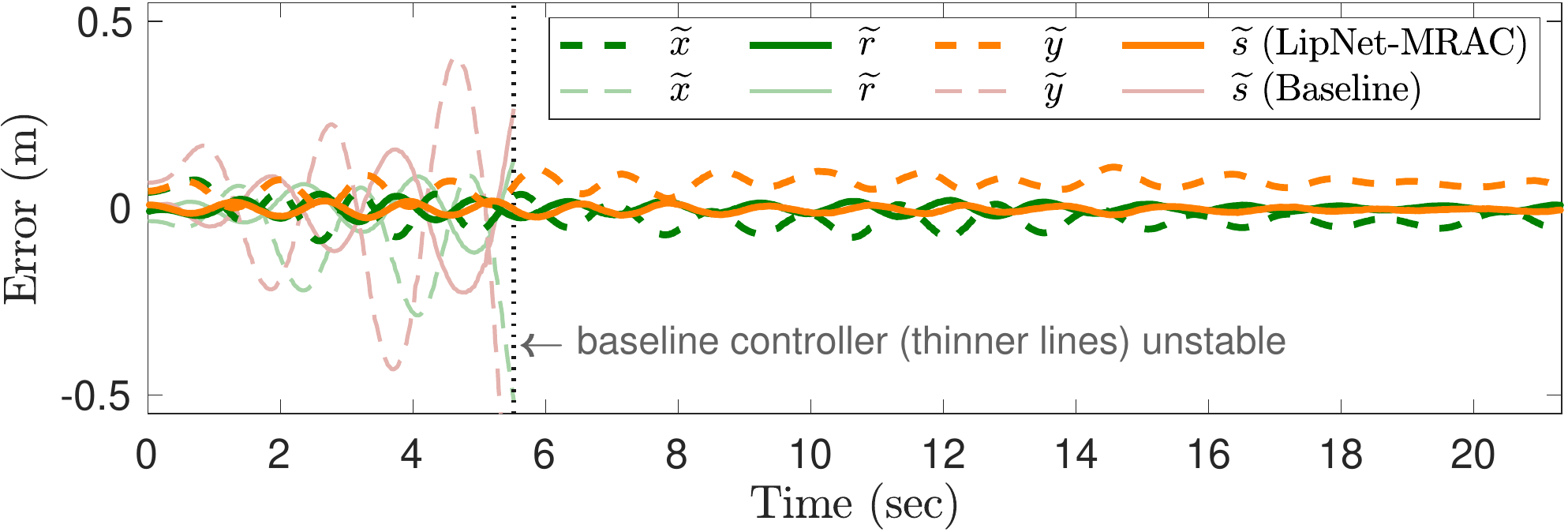}
    \caption{The error of the quadrotor positions $\widetilde{x}$ and $\widetilde{y}$ (dashed lines) and the pendulum relative positions $\widetilde{r}$ and $\widetilde{s}$ (solid lines) when the quadrotor balances the pendulum while hovering at a fixed position. The proposed LipNet-MRAC approach (thicker lines) enables the pendulum to be balanced in the upright position despite dynamics uncertainties. Without the LipNet-MRAC, due to the model-reality gap, the baseline controller alone (thinner lines) cannot stabilize the flying inverted pendulum system.}
    \label{fig:pendulum_tracking_poserr_stab}
    \end{subfigure}
    \caption{Quadrotor balancing a pendulum while hovering.}
    \reducefiggap
\end{figure}

\subsection{LipNet-MRAC for Predictable Acceleration Dynamics}
We first show that the proposed LipNet-MRAC can make the acceleration dynamics of the Bebop quadrotor behave as different predefined reference models. For simplicity of the outer-loop controller design, we choose linear reference acceleration models of the following form:
\begin{align}
\label{eqn:quad_pend_dyn}
   \mathbf{a}_{m,k+1} = \begin{bmatrix} \beta_{mx} & 0\\0&\beta_{my}\end{bmatrix}\mathbf{a}_{m,k} + \begin{bmatrix} \alpha_{mx} & 0\\0&\alpha_{my}\end{bmatrix}\mathbf{u}_{m,k},
\end{align}
where $\tau_m = (\alpha_{mx}, \beta_{mx}, \alpha_{my}, \beta_{my})$ are model parameters.

To illustrate the idea of our proposed approach, we first set the reference model parameters to $\tau_m = (0.35, 0.65, 0.35, 0.65)$. Figure~\ref{fig:trajectory_tracking} shows the quadrotor system response with the baseline controller, and with the LipNet-MRAC on one test trajectory. As can be seen from the plot, the adaptive LipNet brings the acceleration response of the quadrotor system close to the given reference model. 

To further demonstrate the efficacy of the LipNet-MRAC approach, we randomly sample five sets of model parameters $\tau_m$ and apply the LipNet without any fine tuning of the learning algorithm parameters. To formally evaluate the performance of the reference model adaptation approach, we use the $\nu$-gap metric from robust control~\cite{zhou1998essentials} to measure the `distance' between the reference model and the quadrotor system response with and without LipNet adaptation. Intuitively, two dynamical systems that are close in term of the $\nu$-gap can be stabilized by the same controller. Figure~\ref{fig:similarity_estimates} shows the estimated $\nu$-gap metric using experimental data from the quadrotor and the iterative algorithm outlined in \cite{sorocky-icra20}. A smaller $\nu$-gap value indicates a higher similarity between the reference model and the quadrotor system. The plot shows that the LipNet-MRAC approach can reliably make the quadrotor system behave close to the five reference models. In the next subsection, we apply LipNet-MRAC to the flying inverted pendulum problem.

\begin{figure}
    \centering
    \vspace{1em}
    % plotting script: analysis_scripts/plot_model_ref_traj.m
    \begin{subfigure}{.5\textwidth}
    \centering
    \includegraphics[trim=0cm 0cm 0cm 0cm, width=\columnwidth]{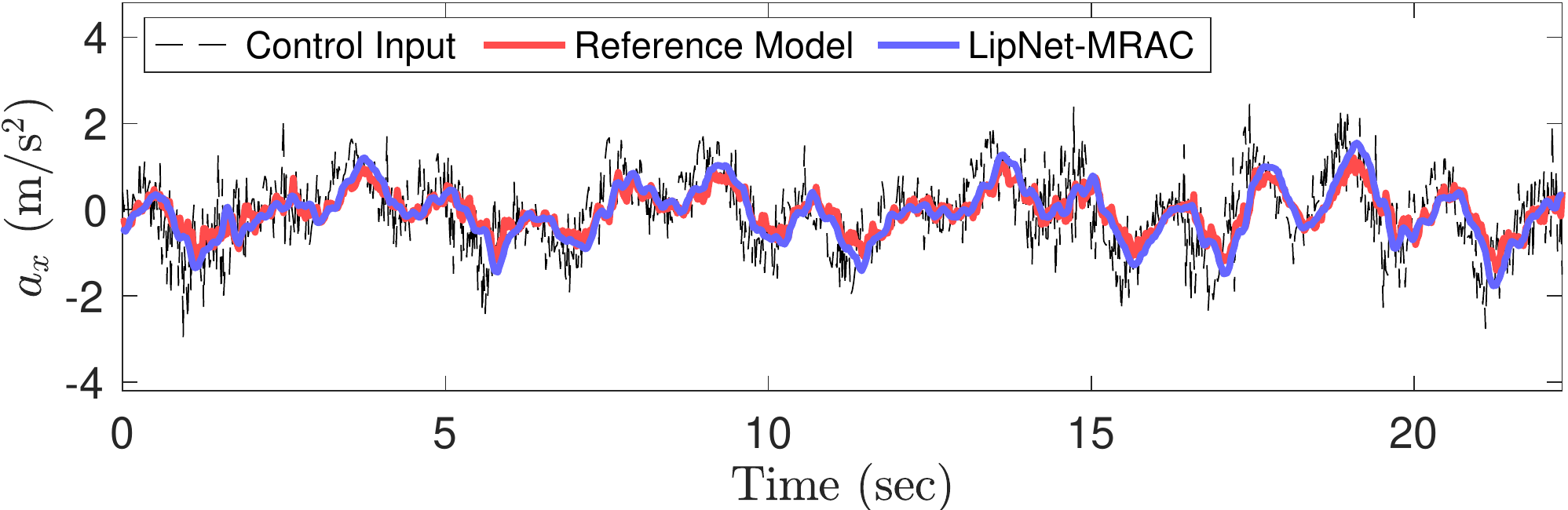}
    \caption{Performance validation of the underlying LipNet-MRAC module. The actual acceleration of the quadrotor (blue) closely follows the output acceleration of the reference model (red). Similar result is observed for $a_y$. The control input signal (black) is generated by the high-level LQR controller.}
    \label{fig:pendulum_tracking_acc_traj}
    \end{subfigure}\\[1em]
    % plotting script: analysis_scripts/plot_model_ref_traj.m
    \begin{subfigure}{.5\textwidth}
    \centering
    \includegraphics[trim=0cm 0cm 0cm 0cm, width=\columnwidth]{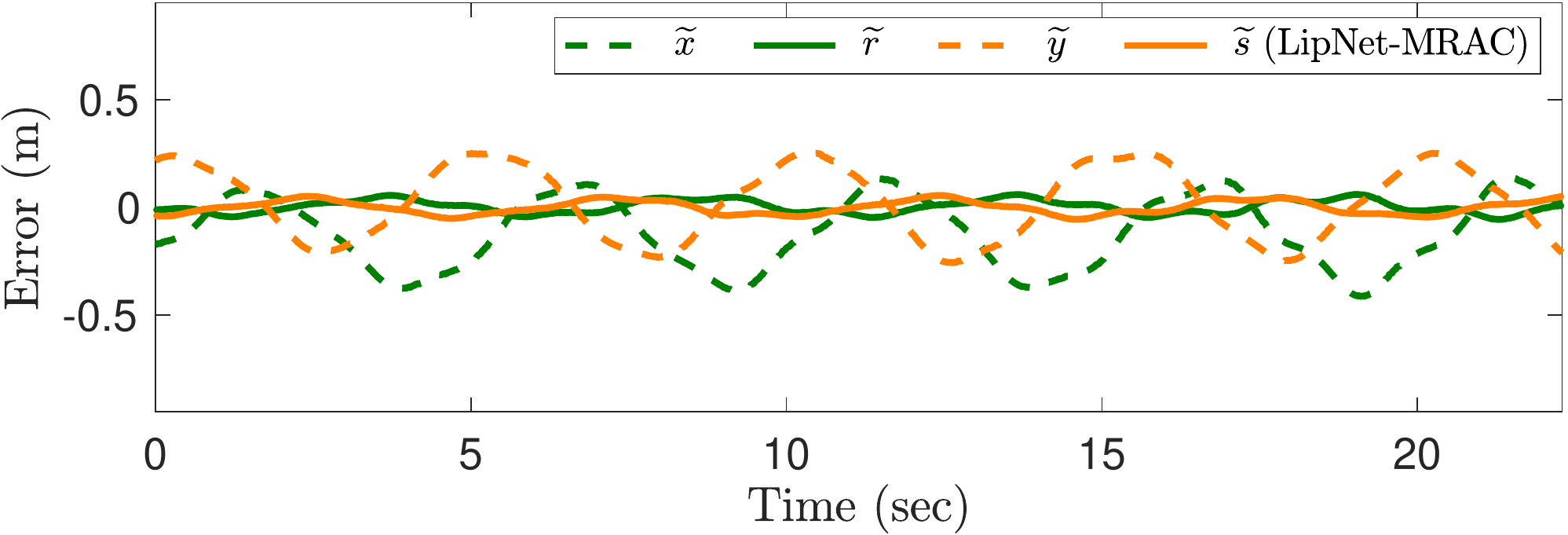}
    \caption{The LipNet-MRAC approach allows the quadrotor to balance the pendulum while tracking a circular trajectory with a radius of 0.25~m and angular velocity of 1.25~rad/sec. The RMS error in the quadrotor positions (dashed lines) and the pendulum positions (solid lines) are 0.27 m and 0.04 m, respectively.
    }
    \label{fig:pendulum_tracking_poserr_traj}
    \end{subfigure}
    \caption{Quadrotor balancing a pendulum while tracking a trajectory.}
    \reducefiggap
\end{figure}

\subsection{Inverted Pendulum on a Quadrotor Experiments}
An LQR stabilization controller is designed based on the dynamics in~\eqref{eqn:quad_pend_dyn_extended}, where the reference acceleration model has the form of~\eqref{eqn:quad_pend_dyn}. In the controller design process, we expect that the quadrotor system behaves as the reference model; we do not need to explicitly model the acceleration dynamics of the quadrotor system or modify the default attitude controller onboard of the quadrotor platform. Our experiments encompass the following tests: \textit{(i)} pendulum stabilization, \textit{(ii)} pendulum stabilization with wind and tap disturbances, and \textit{(iii)} pendulum stabilization while tracking circular trajectories.

We first show results for the case when the quadrotor is commanded to hover at a fixed point while balancing the pendulum. Figure~\ref{fig:pendulum_tracking_acc_stab} shows the acceleration response of the quadrotor system. It can be seen that, as desired with the LipNet-MRAC, the actual acceleration of the quadrotor follows the output of the reference model. As compared to the baseline system of the quadrotor, the acceleration reference model has an input-to-output gain closer to unity, which facilitates the outer-loop LQR controller design. Figure~\ref{fig:pendulum_tracking_poserr_stab} shows the resulting errors in the pendulum and quadrotor system. Given the predictable behaviour of the acceleration dynamics, we see that the outer-loop pendulum controller can successfully balance the pendulum while keeping the quadrotor position error close to zero. The RMS error in the quadrotor and pendulum positions are 0.08~m and 0.02~m, respectively. On the contrary, if we use the baseline controller alone, there is a model-reality gap and the overall system is not stable (lighter lines in Figure~\ref{fig:pendulum_tracking_poserr_stab}).  As we demonstrate in the supplementary video, the proposed LipNet-MRAC-based controller design is even able to maintain the pendulum in the upright position when wind disturbances are applied to the quadrotor or a gentle force is applied to the pendulum.

Next, we show the case when the quadrotor is commanded to track a circular trajectory of radius 0.25~m and angular frequency 1.25~rad/sec while balancing a pendulum. Figure~\ref{fig:pendulum_tracking_acc_traj} shows the acceleration response of the quadrotor system, which closely tracks the output of the reference model despite the sharp changes in the input signal. Figure~\ref{fig:pendulum_tracking_poserr_traj} shows the position errors of the pendulum and the quadrotor. The quadrotor is able to track the circular trajectory while keeping the pendulum balanced. The RMS error in the quadrotor and pendulum positions are 0.27~m and 0.04~m, respectively.  In the supplementary video, we show that the quadrotor can successfully track circular trajectories with angular frequencies up to 2.09~rad/sec, while keeping the pendulum balanced.

\section{CONCLUSIONS}
\label{sec:conclusions}
In this paper, we presented a neural model reference adaptive approach (LipNet-MRAC) to make nonlinear systems with possibly unknown dynamics behave as a predefined reference model. By leveraging the representative power of DNNs, the proposed approach can be applied to a larger class of nonlinear systems than other approaches in the literature. Moreover, we derive a certifying Lipschitz condition that guarantees the stability of the overall adaptive LipNet framework. We applied the proposed approach to a flying inverted pendulum. Our experiments show that the proposed approach is able to make the dynamics of an unknown black-box quadrotor system behave in a predictable manner, which facilitates the outer-loop pendulum stabilization controller synthesis. By complementing a standard controller with the proposed LipNet-MRAC, we successfully stabilized an inverted pendulum with an off-the-shelf quadrotor platform whose dynamics are not known a-priori. In future work, we would like to analyze the impact of the LipNet adaptation errors on the performance of the outer-loop model-based control design.

\bibliographystyle{IEEEtran}
\bibliography{IEEEabrv,references}

% The experimental results confirm 
% the proposed adaptive network approach to flying inverted pendulum experiments and showed that the proposed approach can ensure tthat the accelratino dynamcis follow clsely with reference moeld which tabilzaation.
\addtolength{\textheight}{-12cm}   % This command serves to balance the column lengths
                                  % on the last page of the document manually. It shortens
                                  % the textheight of the last page by a suitable amount.
                                  % This command does not take effect until the next page
                                  % so it should come on the page before the last. Make
                                  % sure that you do not shorten the textheight too much.

\vfill

\end{document}